\DeclareMathOperator*{\argmin}{arg\,min}
\newtheorem{proposition}{Proposition}
\newcommand\numberthis{\addtocounter{equation}{1}\tag{\theequation}}
\newcommand{\myrowcolour}{\rowcolor[gray]{0.925}}
\newcommand{\highest}[1]{\textcolor{blue}{\mathbf{#1}}}
\definecolor{rulecolor}{RGB}{70,10,171}
\definecolor{tableheadcolor}{RGB}{120,50,200}
\newcommand{\topline}{ %
        \arrayrulecolor{rulecolor}\specialrule{0.1em}{\abovetopsep}{0pt}}%
\newcommand{\midtopline}{ %
        \arrayrulecolor{rulecolor}\specialrule{\lightrulewidth}{0pt}{0pt}}%
\newcommand{\bottomline}{ %
        \arrayrulecolor{rulecolor} \specialrule{\lightrulewidth}{0pt}{0pt}}%
\ificcvfinal\pagestyle{empty}\fi
\begin{document}

\title{Dilated Convolutional Neural Networks for Sequential Manifold-valued Data}

\author{Xingjian Zhen$^{\dag\ast}$ \ \ Rudrasis Chakraborty$^{\ddag\ast}$ \ \ Nicholas Vogt$^\dag$ \ \ Barbara B. Bendlin$^\dag$ \ \ Vikas Singh$^\dag$
\\ $^\dag$University of Wisconsin Madison $\quad$ $^\ddag$University of California, Berkeley \\{\small $^\ast$Equal contribution}
}

\maketitle

\begin{abstract}
Efforts are underway to study ways via which the power
of deep neural networks can be extended to non-standard data types such as structured data (e.g., 
graphs) or manifold-valued data (e.g., unit vectors or special matrices).
Often, sizable empirical improvements are possible when the geometry of such
data spaces are incorporated into the design of the model, architecture, and the algorithms.
Motivated by neuroimaging applications,
we study formulations where the data are {\em sequential manifold-valued
measurements}. This case is common in brain imaging, where the samples correspond to
symmetric positive definite matrices or orientation distribution functions.
Instead of a recurrent model which poses computational/technical issues,
and inspired by recent results showing the viability of dilated convolutional models for
sequence prediction, we develop a dilated convolutional neural network architecture
for this task. On the technical side, we show how the modules needed in our network
can be derived while explicitly taking the Riemannian manifold structure into account.
We show how the operations needed can leverage known results
for calculating the weighted Fr\'{e}chet Mean (wFM).
Finally, we present scientific results for group difference analysis in Alzheimer's disease (AD)
where the groups are derived using AD pathology load: here the model finds
several brain fiber bundles that are related to
AD even when the subjects are all still cognitively healthy. 
\end{abstract}
\vspace*{-0.8cm}
\section{Introduction}\label{intro}
\vspace*{-0.3cm}
The classical definition of convolution assumes that the data are scalar or vector-valued and lie on discrete equally spaced intervals.
This assumption is ideal for natural images and central to how we use convolutional filters in deep neural networks but 
is far less appropriate for other domains where the data are structured such as
meshes, graphs or measurements on a manifold. In computer vision and machine learning, these problems that need deep learning models for structured data 
are studied under the topic called geometric deep learning \cite{bronstein2017geometric}, which has led to a number of elegant approaches including
convolutional neural networks (CNN) on non-Euclidean data \cite{cohen2018spherical,kondor2018covariant}. 
The reason this is important is that mathematically, non-Euclidean data violates a number of key properties of Euclidean spaces such as 
a global linear structure and coordinate system, as well as 
shift invariance/equivariance. As a result,
the core operations we use in classical statistics and machine learning as well as within deep neural network architectures
often need to be tailored based on the geometry and specifics of the data at hand.
When such adjustments are made in modern deep learning architectures, a number of authors have reported sizable improvements in the performance of the learning algorithms 
\cite{chakraborty2018manifoldnet, chakraborty2018h,kondor2018generalization,cohen2018spherical,huang2016building,huang2017riemannian,cohen2016steerable}.

We should note that specializing learning methods to better respect or exploit the {structure} (or geometry) of the data are not a new development.
Time series data are common in finance \cite{tsay2005analysis}, and as a result, has been
analyzed using specialized methods in statistics for decades. 
Surface normal vectors on the unit sphere have been widely used in graphics \cite{straub2015dirichlet}, and
probability density functions, as well as covariance matrices, are common in both machine learning and computer vision \cite{srivastava2007riemannian,dominici2002use}. 
In neuroimaging, which is a key focus of our paper, the structured measurement at a voxel of an image may capture water diffusion \cite{basser1994mr,wang2005dti,Lenglet2006,Jian_NI07,aganj2009odf,cheng2012efficient} or local structural change \cite{hua2008tensor,zacur2014multivariate,kim2017riemannian}. The latter example
is commonly known as the Cauchy deformation tensor (CDT) \cite{kim2017riemannian} and has been utilized
to achieve improvements over brain imaging methods such as tensor-based morphometry \cite{lepore2006multivariate,ridgway2009statistical,ashburner2004morphometry}. 
When the mathematical properties of such data are exploited,
one often needs new loss functions and specialized optimization schemes. This step often involves
first defining an
intrinsic metric for the underlying  geometry (structure) of the data.
It is important to note that within
geometric deep learning for {\em manifolds}, two types of settings are often considered. 
The {\bf first} case is where the data are functions on a manifold. The {\bf second} case corresponds to the setting where
data are sample points on a manifold, such as a Riemannian manifold. In this paper, we study the second setting, which is not covered in the form described
here in existing works including \cite{bronstein2017geometric}. 

When the structure or geometry of the data informs the formulation of the learning task (or algorithm), 
we obtain differential geometry inspired algorithms where the role of the extrinsic or intrinsic
metric induced by the data is explicit. 
Many datasets do {\em not} have a temporal or sequential component associated with each sample. 
However, the analysis of temporal (or sequential) data is an important area of machine learning and vision, e.g., 
within action recognition \cite{afsari2012group,bissacco2001recognition,turaga2008statistical} and
video segmentation \cite{he2012incremental}, 
the study of analogous geometric ideas in this regime, especially within deep learning, is limited.
Specifically, there are few existing proposals describing deep neural network models for structured (or manifold-valued)
{\em sequential} data. 
Recently in \cite{chakraborty2018statistical}, the authors proposed a {\em recurrent} model for the manifold of symmetric positive definite (SPD) matrices.
This work is interesting and replaces a number of blocks
within a recurrent model with the ``statistical recurrent units''. 
But it is known that training recurrent models is
more involved than convolutional architectures -- shortly, our experiments will show that a $2\times$ speed-up
(by using a convolutional instead of a recurrent model) can be achieved.
While
the current consensus, within the community, is that sequential data should involve a recurrent network \cite{elman1990finding},
as noted by \cite{bai2018convolutional}, emerging
results indicate that convolutional architectures often perform superior to recurrent networks on ``sequential'' applications such as audio synthesis.
In fact, even historically, convolutional models were used for
1-D {\em sequential} data \cite{hinton1990connectionist,lecun1995convolutional}. Now, given that
most use-cases of learning sequential models on manifold-valued data will {\em not} require the infinite memory capabilities offered by a recurrent model,
it seems natural to investigate the extent to which convolutional models may suffice.
Notice that in order to get the long effective memory from a CNN model, one needs to increase the depth and/or increase the receptive field: 
this is provided by extensions such as dilated convolutions. 
We find that the two key ingredients in \cite{bai2018convolutional} to achieve similar or better performance than a recurrent model for sequential tasks involves \begin{inparaenum}[\bfseries (a)] \item using dilations to increase the receptive field of each convolution and 
\item using residual connections to design a deeper but stable network. \end{inparaenum}
It seems logical
that these developments should be an ideal starting point in
designing models and algorithms for {\bf sequential manifold-valued data} -- the goal of this work. 
Our key {\bf contribution} is the design of a Dilated CNN model for sequential manifold-valued data and showing its applicability in performing statistical analysis of brain images, specifically,
diffusion-weighted MR images. To do so, we 
\begin{inparaenum}[\bfseries (a)] \item define dilation for the convolution operator on the manifold of interest \item define residual connections for our architecture
\item define weight normalization/dropout to add regularization/stability for the deeper network. \end{inparaenum}
We show that this yields an efficient formulation for sequential manifold-valued data, where few
exist in the literature at this time. On the scientific side,
we show that such a construction gives us the ability to identify structural connectivity changes in
asymptomatic individuals who are at risk for developing
Alzheimer's disease (AD) but are otherwise cognitively healthy. 

\vspace*{-0.3cm}
\section{Preliminaries}
\label{sec:prelim}
\vspace*{-0.2cm}
The motivation of this work is the analysis of sequential manifold-valued data, using deep architectures.
As described above, our architecture utilizes ideas presented earlier in the context of 
dilated convolutional neural networks (DCNN) on Euclidean spaces \cite{bai2018convolutional}. To set up our formulation, we review the standard
DCNN formulation and then describe our proposed manifold-valued DCNN framework.

\textbf{Dilated Convolutions \cite{bai2018convolutional}:} Given a 1-D input sequence $\mathbf{x}: \mathbf{N} \rightarrow \mathbf{R}^n$ and a  kernel $w: \left\{0, \cdots, k-1\right\} \rightarrow \mathbf{R}$, the dilated convolution function $\left(\mathbf{x}\star_d w\right): \mathbf{N} \rightarrow \mathbf{R}^n$ is:
\vspace*{-0.35cm}
\begin{align}
\label{theory:eq1}
\left(\mathbf{x}\star_d w\right)(s) = \sum_{i=0}^{k-1} w(i) \mathbf{x}(s-id),
\end{align}
\vspace*{-0.5cm}

where $\mathbf{N}$ is the set of natural numbers, and $k$ and $d$ are the kernel size and the dilation factor respectively. 
Notice that with $d=1$, we get the normal convolution operator. In a dilated CNN, the receptive field size will depend on the depth of the network as well as on the choice of $k$ and $d$. Thus, the authors in \cite{bai2018convolutional} suggested the use of {\it residual connections} \cite{he2016deep} -- this was found to provide stability for deeper networks.
Notice that, unlike the standard residual network connection, here the authors used a $1\times1$ convolution layer in order to match the width of the input and the output. Additionally,
in order to regularize the network, the authors used {\it weight normalization} \cite{salimans2016weight} and {\it dropout} \cite{srivastava2014dropout}.
The weight normalization was applied to the kernel of the dilated convolution layer. The dropout was
implemented by randomly zeroing out an entire output channel of a dilated convolution layer. Finally, as an activation function, the authors used ReLU non-linearity. A schematic diagram of a standard dilated CNN is given in Fig. \ref{fig0}.

\begin{figure}[!t]
\setlength{\abovecaptionskip}{-0cm}
\setlength{\belowcaptionskip}{-0.65cm} 
        \centering
                \includegraphics[scale=0.6]{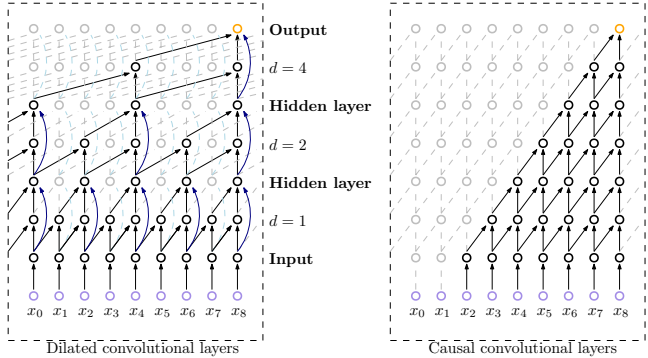}
               \caption{\footnotesize Schematic diagram of dilated CNN and causal CNN (see \cite{bai2018convolutional} for definition and additional description).}\label{fig0}

\end{figure}

Next, we discuss generalizing the operations needed within a DCNN so that they can operate on manifold-valued data. Specifically, we will generalize the following operations: \begin{inparaenum}[\bfseries (1)] \item Dilated convolution \item Residual connection \item Weight Normalization \item ReLU and \item Dropout, \end{inparaenum} to the setting
where data are manifold-valued. 

Recently in \cite{chakraborty2018manifoldnet}, the authors proposed a CNN architecture for manifolds and/or manifold-valued data. We can utilize some of these ideas towards
deriving the dilated convolution operation. 
Before discussing the details of the definition of dilated CNN for manifold-valued data, we will first introduce some notations, concepts, and terminology. 
\vspace*{0.1cm}

%
{\bf Assumptions:} We use $(\mathcal{M}, g)$ to denote a Riemannian manifold $\mathcal{M}$ with the Riemannian metric $g$ and $d_{\mathcal{M}}:\mathcal{M}\times \mathcal{M}\rightarrow [0, \infty)$ denotes the distance induced by the metric $g$. We assume that the samples on $\mathcal{M}$ lie inside a regular geodesic ball of radius $r$ centered at $p$, $\mathcal{B}_r(p)$, for some $p \in \mathcal{M}$ and $r = \min\left\{r_{\text{cvx}}\left(\mathcal{M}\right), r_{\text{inj}}\left(\mathcal{M}\right)\right\}$. Here, $r_{\text{cvx}}$ and $r_{\text{inj}}$ are the convexity and injectivity radius of $\mathcal{M}$ \cite{groisser2004newton}.

  {\bf Weighted Fr\'{e}chet mean (wFM):} Let $\left\{X_i\right\}_{i=1}^N$ be samples on $\mathcal{M}$. The authors in \cite{chakraborty2018manifoldnet} define the convolution operation using the
  weighted Fr\'{e}chet mean (wFM) \cite{Frechet1948a} of $\left\{X_i\right\}$. Consider a
  one dimensional kernel $\left\{w(i)\right\}_{i=1}^N$ satisfying the convexity constraint,
  i.e.,  \begin{inparaenum}[\bfseries (a)] \item $\forall i, w(i) > 0$ \item $\sum_i w(i) = 1$. \end{inparaenum}
  Then, the wFM (uniqueness is guaranteed by the statement above) is defined as:
\vspace*{-0.3cm}
\begin{equation}
\label{theory:eq2}
\textsf{wFM}\left(\left\{X_i\right\}, \left\{w\right\}\right) = \argmin_M \sum_{i=1}^N w(i) d_{\mathcal{M}}^2(X_i, M),
\end{equation}
\vspace*{-0.4cm}

{\bf Group of isometries:} The set $I(\mathcal{M})$ of all isometries of $\mathcal{M}$ forms a group
  with respect to function composition. We will use $G$ to denote this group and for $g
  \in G$, and $X \in \mathcal{M}$, let $g.X$ denote the result of
  applying the isometry $g$ to point $X$ (`.' simply denotes the group action).  
  \vspace*{0.1cm}

  {\bf Key Application focus:}
  Diffusion-weighted imaging (DWI) is a magnetic resonance imaging (MRI) technique that
  measures
  the diffusion of water molecules to generate contrast in MRI, and
  has been widely applied to measure the loss of structural connectivity in the brain. 
  At each voxel in the image, water diffusion can be variously represented: two common options are
  using an elliptical approximation (see Fig. \ref{DTI}(a)) where a $3 \times 3$ covariance matrix expresses the diffusivity properties or an orientation
  distribution function where one represents the probability densities of water diffusion over different orientations.
  One can divide the 3D image into anatomically meaningful parcels in Fig. \ref{DTI}(b) and then run standard tractography routines
  to estimate the strength of connectivity between each pair of anatomical parcels \cite{raamana2018graynet}.
  The fiber bundles, hence estimated, are shown in Fig. \ref{DTI}(c). For analysis, one often focuses
  on certain important fiber bundles instead of analyzing the full set of fibers.
  Notice that if we specify a starting and ending anatomical region for a fiber bundle,
  we can consider the corresponding covariance matrices encountered on this ``path'' as multi-variate
  manifold-valued measurements of this function. This is precisely the type of sequential
  manifold-valued data that we will seek to model in this paper. 

\begin{figure}[!ht]
\setlength{\abovecaptionskip}{-0cm}
\setlength{\belowcaptionskip}{-0.65cm} 
        \centering
                \includegraphics[scale=0.04]{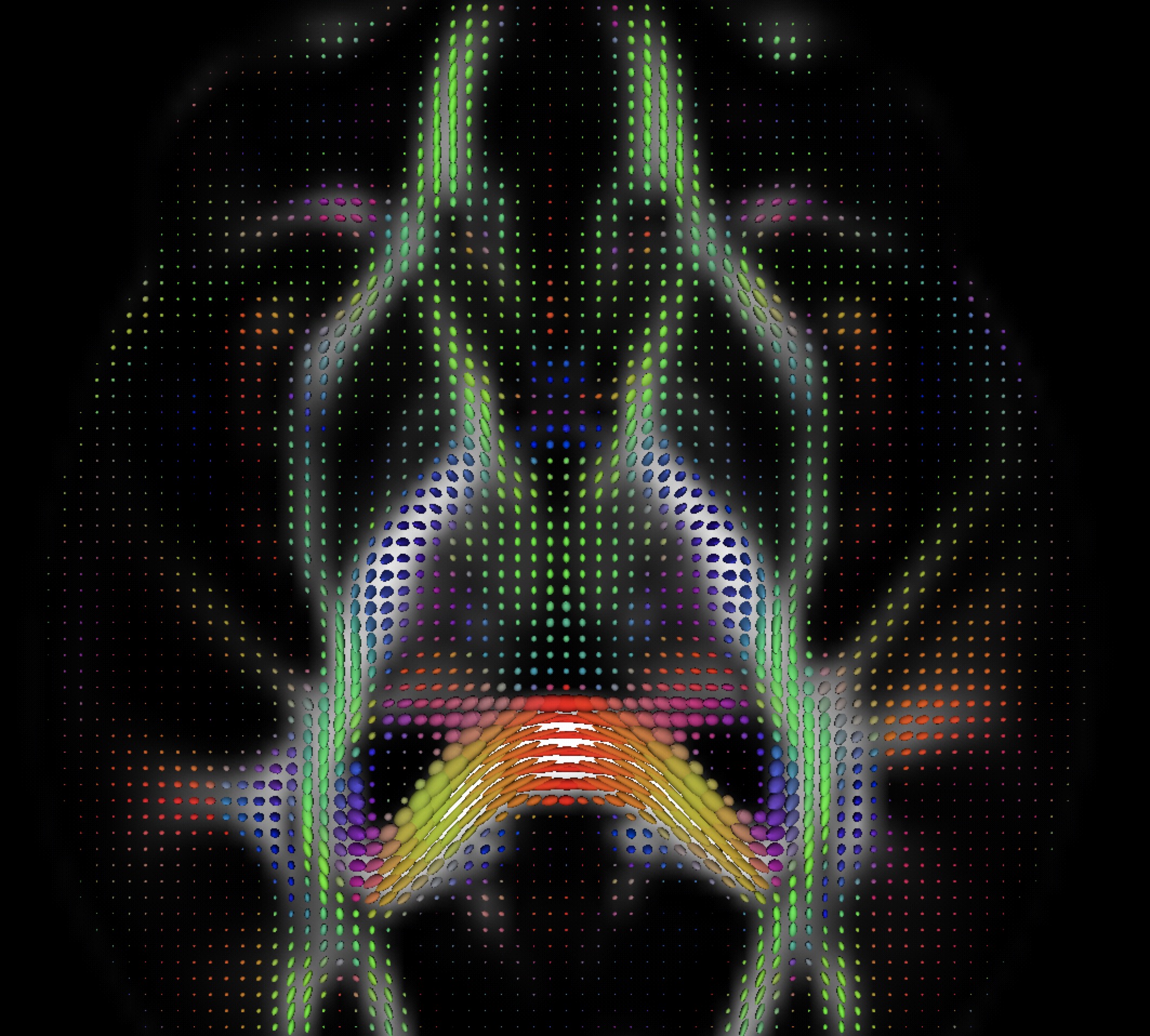}
                \includegraphics[scale=0.195]{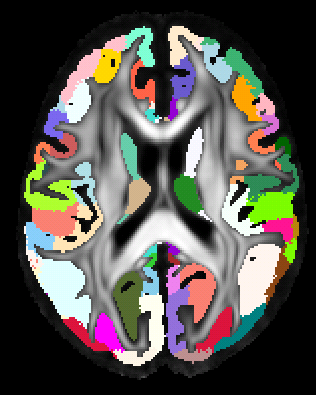}
                \includegraphics[scale=0.25]{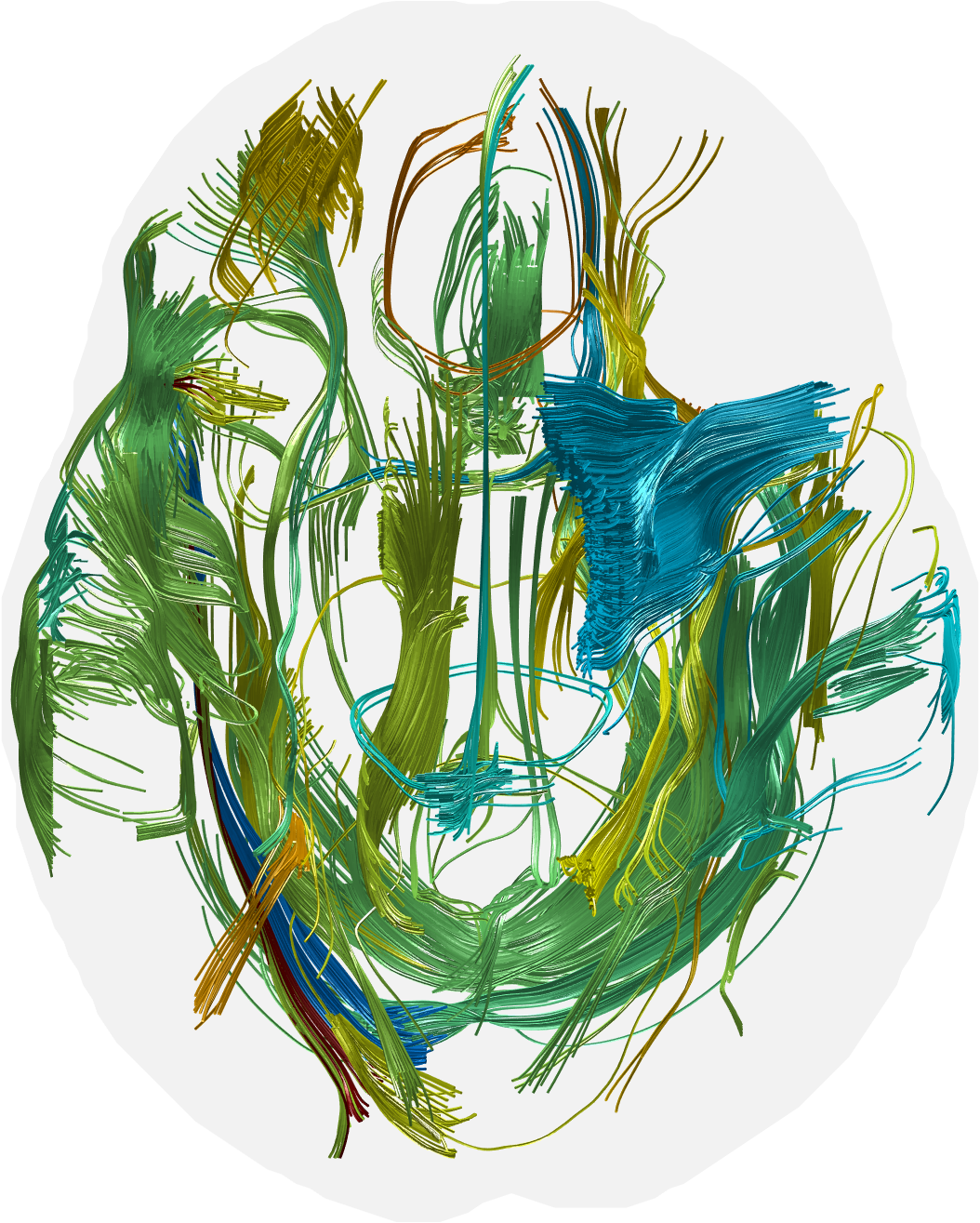}
               \caption{\footnotesize {\it (Left-Right)} (a) diffusion MRI, (b) Parcels, (c) Fiber bundles}\label{DTI}
\end{figure}{}

\section{Dilated convolutions for manifold-valued measurements}\label{theory}
We now describe how to obtain the specific components needed in our architecture for manifold-valued data. 

\textbf{Dilated convolution operator:} Given a 1-D input sequence $X: \mathbf{N} \rightarrow \mathcal{M}$ and a  kernel $w: \left\{0, \cdots, k-1\right\} \rightarrow \mathbf{R}$ satisfying the convexity constraint, the dilated convolution function $\left(X\star_d w\right): \mathbf{N} \rightarrow \mathcal{M}$ is defined as:
\vspace*{-0.3cm}
\begin{equation}
\label{theory:eq3}
\left(X\star_d w\right)(s) = \argmin_M \sum_{i=0}^{k-1} w(i) d_{\mathcal{M}}^2(X(s-id), M),
\end{equation}
\vspace*{-0.6cm}

where as before, $k$ and $d$ are the kernel size and dilation factor respectively. Observe that the convexity constraint on the kernel is merely to ensure that the result also
lies on the manifold.
We will use the weighted Fr\'{e}chet mean (wFM) as a dilated convolution operator. This choice is mathematically justified because 
\begin{inparaenum}[\bfseries (1)] \item Eq. \eqref{theory:eq1} is the minimizer of the weighted variance which is wFM, if the choice of distance is the $\ell_2$ distance. \item We will
  show in Proposition \eqref{theory:prop1} that the dilated convolution operator is equivariant to the action of $G$. \end{inparaenum}
This is a direct analog of its Euclidean counterpart. Notice that the dilated convolution operator defined in \eqref{theory:eq1} is equivariant to translations, i.e.,
if $\mathbf{x}$ is translated by some amount $\mathbf{t}$, so is the result $\left(\mathbf{x}\star_d w\right)$.
On the manifold $\mathcal{M}$, the analog of translation is the action of $G$, hence
the equivariance of $\left(X\star_d w\right)$ with respect to $G$
is a desirable property. 
\vspace*{-0.2cm}
\begin{proposition}
\label{theory:prop1}
Using notations in \eqref{theory:eq3} and given $w$ satisfying the convexity constraint, let $F: X \mapsto
\left(X\star_d w\right)$. Then, $F$ is $G$-equivariant, i.e., $F$ is equivariant to the action of $G$.
\end{proposition}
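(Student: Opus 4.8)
The plan is to prove equivariance directly from the defining property of an isometry. Writing the transformed input sequence pointwise as $(g.X)(s) = g.(X(s))$ for $g \in G$, the goal $F(g.X) = g.F(X)$ reduces, since $F$ acts index by index, to fixing an arbitrary index $s$ and showing
\begin{equation*}
\bigl((g.X)\star_d w\bigr)(s) = g.\bigl((X\star_d w)(s)\bigr).
\end{equation*}
First I would unfold the left-hand side using the definition in \eqref{theory:eq3}, so that it reads $\argmin_M \sum_{i=0}^{k-1} w(i)\, d_{\mathcal{M}}^2\bigl(g.X(s-id), M\bigr)$.

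The crux is the isometry identity $d_{\mathcal{M}}(g.A, g.B) = d_{\mathcal{M}}(A,B)$ for all $A,B \in \mathcal{M}$, combined with the fact that $g$ is a bijection of $\mathcal{M}$. I would reparametrize the minimization by substituting $M = g.M'$; because $g$ is a bijection, as $M$ ranges over $\mathcal{M}$ so does $M'$, so this is a bijection of the feasible set. Applying the isometry property termwise gives $d_{\mathcal{M}}^2(g.X(s-id), g.M') = d_{\mathcal{M}}^2(X(s-id), M')$, whence the objective collapses to $\sum_{i} w(i)\, d_{\mathcal{M}}^2(X(s-id), M')$, whose minimizer over $M'$ is exactly $(X\star_d w)(s)$. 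Undoing the substitution, the minimizing $M$ equals $g.\bigl((X\star_d w)(s)\bigr)$, which is the claim. Note the reparametrization matches minimizers, not merely minimal values, precisely because the bijection carries the feasible set onto itself.

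The one point that needs care — and the only real obstacle — is the well-definedness of the two $\argmin$ problems, since equivariance is only meaningful once each wFM is a single point. Uniqueness for the untransformed samples is guaranteed by the standing assumption that they lie inside the regular geodesic ball $\mathcal{B}_r(p)$. I would close the argument by observing that an isometry maps $\mathcal{B}_r(p)$ to $\mathcal{B}_r(g.p)$, a regular geodesic ball of the \emph{same} radius (since convexity and injectivity radii are isometry invariants), so the transformed samples $g.X(s-id)$ satisfy the identical hypothesis and their wFM is likewise unique. This upgrades the set-level equality above to an equality of single points and completes the proof of $G$-equivariance.
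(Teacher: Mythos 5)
Your proof is correct and follows essentially the same route as the paper's: both transfer the wFM objective through the isometry identity $d_{\mathcal{M}}(g.A, g.B) = d_{\mathcal{M}}(A,B)$ so that the minimizer of the transformed problem corresponds to the original minimizer under the bijection $M \mapsto g.M$. The only addition on your part is the explicit verification of well-definedness (uniqueness of the wFM for the transformed samples, via isometry-invariance of the regular geodesic ball), a point the paper's proof leaves implicit; this is a genuine, if minor, strengthening rather than a different method.
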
 
\begin{proof}
  Observe that, if $g \in G$ acts on $X$, then, $X(s-id)\mapsto g.X(s-id)$, for all $s, d, i$. Since $g$ is an element of isometry group, therefore,
  $d_{\mathcal{M}}(g.X(s-id), g.M) = d_{\mathcal{M}}(X(s-id), M)$, for all $M \in \mathcal{M}$. So, $g.M = \left(g.X\star_d w\right)(s)$ iff $M=\left(X\star_d w\right)(s)$, which concludes our proof.
\end{proof}  
\vspace*{-0.2cm}
In \eqref{theory:eq3}, since $\left(X\star_d w\right)$ is a $\mathcal{M}$ valued function, we will use $M$ as a manifold-valued function, i.e., $M(s) =   \left(X\star_d w\right)(s)$. Similar to the Euclidean dilated convolution layer, we learn multiple dilated kernels (given by the number of output channels) for a dilated convolutional layer. 
\vspace*{-0.cm}

\textbf{Residual connection:} Let $X$ and $F$ be the input and output of a dilated convolutional layer where the numbers of channels are $c_{in}$ and $c_{out}$. Then, analogous to the Euclidean residual connection, we define the residual connection using two steps: \begin{inparaenum}[\bfseries (a)] \item First, concatenate $X$ and $F(X)$ to get $(c_{in}+c_{out})$ number of channels. \item Use wFM to extract $c_{out}$ number of outputs. \end{inparaenum} More formally, let $R(X, F_X)$ be the output of the residual connection, then the $k^{th}$ channel of the residual connection, $R_k(X, F(X))$ is given by:
\vspace*{-0.4cm}
{\small
\begin{align*}
\label{theory:eq4}
&R_k(X, F(X))(s) \overset{def}{\equiv} \argmin_M \\[-2.5mm]
&\left(\sum_{i=1}^{c_{in}}w_{k}(i) d_{\mathcal{M}}^2(X_i(s), M) + \sum_{j=1}^{c_{out}}w_{k}(j+c_{in})d_{\mathcal{M}}^2(F_j(s), M)\right),\\[-1.5mm]
&\mbox{s.t.}\sum \nolimits_i w_{k}(i) = 1,\forall w_{k}(i)>0,\numberthis
\end{align*}
}
\vspace*{-0.5cm}

where, $k \in \left\{1, \cdots, c_{out}\right\}$ and $X_i$ and $F_j$ denotes the $i^{th}$ and $j^{th}$ channel of $X$ and $F$ respectively. 
\vspace*{0.1cm}

\textbf{Weight normalization, ReLU, and Dropout:} The weight normalization in the standard Euclidean convolutional network is not needed here since
we impose
a convexity constraint on the kernel. We argue that since
Dropout is a regularizer, we will not use dropout for our manifold-valued
DCNN implementation because of the implicit regularization due to
the convexity constraint. As argued in \cite{chakraborty2018manifoldnet}, wFM is both \begin{inparaenum}[\bfseries (a)] \item a contraction mapping \cite{chakraborty2018manifoldnet} and
\item a nonlinear mapping \end{inparaenum} and hence ReLU or any other non-linearity is not
strictly necessary.
Here, similar reasoning explains why a ReLU is not needed (since the contraction
and non-linear mapping are provided directly by wFM).
\vspace*{0.1cm}

\textbf{Equivariance and Invariance:} A few reasons why convolutional networks are so powerful are \begin{inparaenum}[\bfseries (a)]
\item translational equivariance of a convolution layer and so, weights can be shared across an image
\item translational invariance property of the entire convolutional network which is the property of the fully connected last layer.
\end{inparaenum}
As we showed above, the way we defined our dilated convolution operator leads to equivariance to the action of $G$.
But we still have not shown that the {\em last layer} can be designed in a way that
the output of the network does not change with respect to the action of $G$.
So, we still need an analogous $G$-invariant last layer.
\vspace*{0.1cm}

\textbf{Invariant last layer:} Analogous to the Euclidean recurrent model/ dilated CNN, in the last layer we will only consider the
output of the last time point of a sequence, i.e., if $X$ is the output of the last dilated convolutional layer with $c$ number of channels,
then the input of our last layer is $\left\{X_i(N)\right\}_{i=1}^c$, where $X(N) \in \mathcal{M}$ is the value of the last time point.
We know already that $\left\{X_i(N)\right\}$ are $G$-equivariant. So, in order to make the entire dilated convolutional network $G$ invariant,
we need an invariant last layer. This is analogous to the translational invariant property of a fully connected (FC) layer in the traditional (Euclidean) dilated CNN.
We design our last invariant layer as follows: \begin{inparaenum}[\bfseries (a)] \item We will first learn $nC$ number of wFM (let denoted by $\left\{\mu_i\right\}_{i=1}^{nC}$)
  of $\left\{X_i(N)\right\}_{i=1}^c$ using \eqref{theory:eq2}, where $nC$ is a hyperparameter.
\item For all $i \in \left\{1, \cdots, c\right\}$, and for all $j \in \left\{1, \cdots, nC\right\}$, we compute the distance between $X_i(N)$ and $\mu_j$, denoted by $d_{ij}$.
\item Thus, for each $X_i(N)$, we get $nC$ number of feature representations.
\item We will use a standard fully connected (FC) layer with $c\times nC$ features as input and the desired number of outputs.
\end{inparaenum}

\vspace*{-0.3cm}
\begin{proposition}
\label{theory:prop2}
The last layer is $G$-invariant.
\end{proposition}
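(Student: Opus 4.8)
The plan is to trace the action of an arbitrary isometry $g \in G$ through each of the four steps that define the last layer and show that it has disappeared by the time we reach the input to the FC layer. By Proposition \ref{theory:prop1}, the outputs of the stacked dilated convolution layers are $G$-equivariant, so applying $g$ to the network input sends each $X_i(N) \mapsto g.X_i(N)$. This is the only place the group action enters; everything downstream is a deterministic function of $\{X_i(N)\}$ with the learned weights and FC parameters held fixed, so it suffices to track how each quantity responds to the substitution $X_i(N)\mapsto g.X_i(N)$.

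First I would establish that the learned wFM centers transform equivariantly, i.e. $\mu_j \mapsto g.\mu_j$. This is essentially a re-run of the argument in Proposition \ref{theory:prop1}: since $\mu_j$ is defined by \eqref{theory:eq2} as the $\argmin$ of $\sum_i w(i) d_{\mathcal{M}}^2(X_i(N), M)$ and $g$ preserves all distances, we have $d_{\mathcal{M}}(g.X_i(N), g.M) = d_{\mathcal{M}}(X_i(N), M)$ for every $M$. Hence $M$ minimizes the original objective if and only if $g.M$ minimizes the objective built from $\{g.X_i(N)\}$, and uniqueness of the wFM (guaranteed by the geodesic-ball assumption) pins down $g.\mu_j$ as the new minimizer.

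The key step is then to observe that the feature map collapses this equivariance into invariance. The features fed into the FC layer are the distances $d_{ij} = d_{\mathcal{M}}(X_i(N),\mu_j)$, and under the action of $g$ these become $d_{\mathcal{M}}(g.X_i(N), g.\mu_j) = d_{\mathcal{M}}(X_i(N), \mu_j) = d_{ij}$, again because $g$ is an isometry. Thus every entry of the $c \times nC$ feature matrix is left unchanged by $g$. Since the final FC layer is a fixed function of these invariant features alone, its output is invariant to the action of $G$, which is exactly the claim.

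I do not expect a genuine obstacle here: the entire argument rests on the single property that isometries preserve $d_{\mathcal{M}}$, which is what makes both the equivariance of the wFM centers and the invariance of the distances go through. The one point worth stating carefully is that the learned parameters, namely the convexity-constrained weights defining each $\mu_j$ and the FC weights, are held fixed and do \emph{not} transform with $g$; it is precisely because they stay constant while only the data is acted upon that the distances, and therefore the output, emerge invariant.
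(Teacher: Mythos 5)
Your proof is correct and follows essentially the same route as the paper's: establish that the wFM centers $\mu_j$ transform equivariantly under $g \in G$, then observe that the distances $d_{ij} = d_{\mathcal{M}}(X_i(N), \mu_j)$ are unchanged because $g$ is an isometry, so the FC layer sees identical inputs. Your version is slightly more careful than the paper's (which simply cites Proposition \ref{theory:prop1} for the equivariance of $\mu_j$, even though that proposition concerns the dilated convolution rather than the wFM of \eqref{theory:eq2}), since you re-run the isometry argument for $\mu_j$ explicitly and note that the learned weights do not transform with $g$; but the underlying idea is identical.
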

\vspace*{-0.4cm}
\begin{proof}
Observe that $d_{ij} = d_{\mathcal{M}}\left(X_i(N), \mu_j\right)$. From Proposition \ref{theory:prop1}, we know that $\mu_j$ is $G$-equivariant, hence, $\mu_j \mapsto g.\mu_j$, for some $g \in G$ if $\forall i, X_i(N) \mapsto g.X_i(N)$. But, $d_{\mathcal{M}}\left(X_i(N), \mu_j\right) = d_{\mathcal{M}}\left(g.X_i(N), g.\mu_j\right)$, which concludes the proof.
\end{proof}
\vspace*{-0.2cm}
In order to reduce the number of parameters in the last layer, we propose a parameter efficient last layer which is defined as using a FC layer on the tangent space, i.e., input $\left\{\textsf{Log}\left(X_i(N)\right)\right\}_{i=1}^c$ as input to the FC layer, where $\textsf{Log}$ is the Riemannian inverse exponential map.

Now, we have all components of our dilated CNN on manifold-valued data.
A schematic of our model is shown in Fig. \ref{fig1}. The building block for a 2-layer manifold
DCNN is shown in Alg. \ref{alg:dcnn}. Note that the network
parameters are scalar-valued, with a convexity constraint.
In order to enforce
the convexity constraint, i.e., $\left\{w(i)\right\} \geq 0$ and $\sum_i w(i) = 1$, we will learn $\left\{\sqrt{w(i)}\right\}$, which can be any real value. We will enforce the sum constraint by normalization. Thus we will use SGD to learn $\left\{\sqrt{w(i)}\right\}$.  

\begin{figure}[!b]
\setlength{\abovecaptionskip}{-0cm}
\setlength{\belowcaptionskip}{-0cm} 
        \centering
                \includegraphics[scale=0.68]{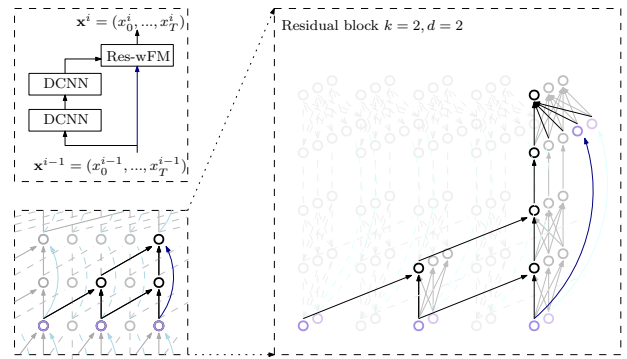}
               \caption{\footnotesize Schematic diagram of the residual block of manifold DCNN. There're two DCNN blocks and one residual connection in one block. wFM is used to extract the $c_{out}=3$ channels from the concatenation.}\label{fig1}
\end{figure}

{\begin{algorithm}[!t]
                \caption{ \label{alg:dcnn} A basic $i^{th}$ DCNN building block with two convolution layers}
                \begin{algorithmic}
                    \Function {DCNN variables}{$N, c^1_{in}, c^1_{out}, c^2_{out}, c_{res}, k_1, d_1, k_2, d_2, nC,c$}
                        \State $x^{i-1} \leftarrow \rm{Input}(c^1_{in}, N)$
                        \State $y_1 \quad \leftarrow \rm{Dilated\_Conv}(x^{i-1}, c^1_{in}, c^1_{out}, k_1, d_1)$
                        \State $y_1 \quad \leftarrow \rm{Dilated\_Conv}(y_1, c^1_{out}, c^2_{out}, k_2, d_2)$\
                        \State $x^{i} \quad \leftarrow \rm{Residual}(x^{i-1}, y_1, c^1_{in}, c^2_{out}, c_{res})$
                        \State $y_o \quad \leftarrow \rm{Inv}(x^{i}, nC, c)$ (For last DCNN block)
                        \EndFunction
                \end{algorithmic}
\end{algorithm}}

\section{Experiments}\label{results}
In this section, we apply the manifold DCNN to answer the following questions: 
\begin{inparaenum}[\bfseries (1)] \item By replacing a RNN with our DCNN with a manifold constraint, 
what improvement in terms of the number of parameters/time can we achieve, 
without sacrificing performance? \item For computer vision applications, 
how much improvement can we get? 
\item When using our method for scientific analysis of neuroimaging data, can we
obtain promising results that show that such models can enable discoveries
beyond current capabilities? \end{inparaenum}

Next, we will answer the questions above by analyzing the comparative performance of manifold DCNN via
four experiments: 
\begin{inparaenum}[\bfseries (1)] 
\item two computer vision applications of classifying videos and 
\item two neuroimaging experiments for scientific discoveries related to Alzheimer's disease.
\end{inparaenum}

\vspace{-0.2cm}
\subsection{Improvement in terms of parameters/time on synthetic and real computer vision datasets}
\vspace{-0.1cm}
In this section, we organize two sets of experiments:
\begin{inparaenum}[\bfseries (1)]
\item Classification of different moving patterns on the Moving MNIST data
\item Classification of $11$ actions on the UCF-11 data.
\end{inparaenum}
Both these experiments serve as empirical evidence of the efficiency of manifold DCNN in terms of the number of parameters and time per epoch. We compared our method with five state-of-the-art sequential models: SPD-SRU \cite{chakraborty2018statistical}, 
LSTM \cite{hochreiter1997long}, SRU \cite{oliva2017statistical}, TT-GRU and TT-LSTM \cite{yang2017tensor}. 
For all methods except TT-GRU and TT-LSTM, before the sequence process module, 
we used a convolution block. For manifold DCNN and SPD-SRU (also for manifold-valued data), 
between the convolution block and the sequence process unit, 
we include a covariance block analogous to \cite{yu2017second}. The architecture of this experiment is shown in Fig.~\ref{fig2}.

\begin{figure}[!t]
\setlength{\abovecaptionskip}{-0.cm}
\setlength{\belowcaptionskip}{-0.3cm} 
        \centering
                \includegraphics[scale=0.65]{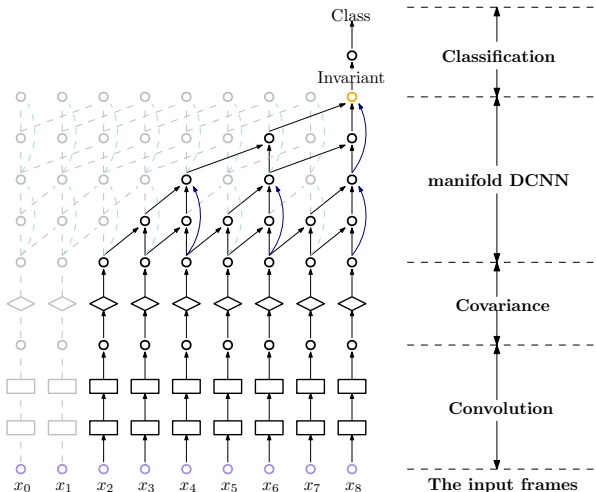}
               \caption{\footnotesize Schematic diagram of the network architecture for vision datasets. We use two CNN to extract the features. And we calculate the covariance between feature channels to get the SPD matrices. In the last layer, we use G-invariant and a fully connected layer to do the classification.}\label{fig2}
\vspace*{-0.2cm}
\end{figure}

As one of the key operations of DCNN is $\textsf{wFM}$, below we will
use an efficient recursive provably consistent estimator of $\textsf{wFM}$ on the space of covariance matrices (SPD with some added small noise along diagonal). Let $X(s)$ be an SPD matrix for all $s \in \mathbf{N}$, and then the $n^{th}$ recursive $\textsf{wFM}$ estimator, $M_n$ is given as:
\vspace*{-0.5cm}
\begin{align}
\label{rodf}
    M_0 = X(s) \qquad M_n = \Gamma_{M_{n-1}}^{X(s-n*d)}\left(\frac{w(n)}{\sum_{j=0}^{n}w(j)}\right),
\end{align}
\vspace*{-0.cm}
where $\Gamma$ is the shortest geodesic on the manifold of SPD matrices equipped with the canonical affine invariant Riemannian metric \cite{ho2013recursive}. 

\vspace*{-0.2cm}
\vspace{-5pt}
\subsubsection{Moving MNIST: Moving pattern classification}
\vspace{-5pt}

We
generated the Moving MNIST data according to the algorithm proposed in \cite{srivastava2015unsupervised}. In this experiment, we classify the moving patterns of different digits. For each moving pattern, we generated $1000$ sequences with length $20$ showing $2$ digits 
moving in the same pattern in a $64\times 64$ frame. The moving speed and the direction 
are fixed inside each class, but the digits are chosen randomly. 
In this experiment, the difference in the moving angle from two sequences across different classes is at least $5^\circ$. 

\setlength{\intextsep}{2pt}%
\setlength{\columnsep}{5pt}%
\begin{table}[!b]
\vspace*{-0.4cm}
\setlength{\abovecaptionskip}{-0cm}
\setlength{\belowcaptionskip}{-0.15cm} 
   \centering
   \scalebox{0.60}{
\begin{tabular}{cccccc} 
\topline\myrowcolour
 & & {\bf time (s)} & \multicolumn{3}{c}{{\bf Test acc.}} \\
\arrayrulecolor\hhline{---~~~}\arrayrulecolor{rulecolor}\hhline{~~~---}\myrowcolour
\multirow{-2}{*}{\bf Model} & \multirow{-2}{*}{\bf \# params.} & {\bf / epoch} & $30^\circ$versus $60^\circ$ & $10^\circ$versus $15^\circ$ & $10^\circ$versus $15^\circ$versus $20^\circ$ \\
\midtopline
DCNN & $\highest{1517}$ & $\sim 4.3$ & $\highest{1.00\pm 0.00}$ & $\highest{1.00 \pm 0.01}$ & $\highest{0.95 \pm 0.01}$ \\
SPD-SRU & 1559 & $\sim 6.2$ & $\highest{1.00\pm 0.00}$ & ${0.96 \pm 0.02}$ & ${0.94 \pm 0.02}$ \\
TT-GRU & $2240$ & $\sim\highest{2.0}$ & $\highest{1.00 \pm 0.00}$ & $0.52 \pm 0.04$ & $0.47 \pm 0.03$ \\
TT-LSTM & $2304$ & $\sim\highest{2.0}$ & $\highest{1.00 \pm 0.00}$ & $0.51 \pm 0.04$ & $0.37 \pm 0.02$ \\
SRU & $159862$ & $\sim 3.5$ & $\highest{1.00 \pm 0.00}$ & $0.75 \pm 0.19$ & $0.73 \pm 0.14$ \\
LSTM & $252342$ & $\sim 4.5$ & $0.97 \pm 0.01$ & $0.71 \pm 0.07$ & $0.57 \pm 0.13$ \\
\bottomline
\end{tabular}
}
\caption{\footnotesize Comparative results on Moving MNIST. Our model achieves the highest accuracy (in blue) with the least \# of parameters in all setups.}
\label{results:tab1}
\end{table}

\textbf{Results:} In Table~\ref{results:tab1}, the results show that our method not only achieves 
the best test accuracy with the smallest number of parameters but is
also $1.5$ times faster than the SPD-SRU  which has the second smallest \# of parameters.
The kernel of CNN we use has size $5\times 5$ with the input channel and
output channel set to $5$ and $10$ respectively.
All parameters are chosen in a way to use the
fewest number of parameters without deteriorating the test accuracy.

{\bf Scalability:} We assess
the running time (training and testing) of manifold DCNN with respect to the SPD matrix size. From Fig. \ref{fig2.5}(a), we can see that as the matrix size increases, the training time increases, while the testing time remains almost the same. This is a desirable property as it indicates that inference time does not depend on matrix size. Also, for different orientations differences, manifold DCNN gives almost perfect classification accuracy with very small standard deviation, as shown in Fig. \ref{fig2.5}(b). 
\begin{figure}[!t]
\setlength{\abovecaptionskip}{-0cm}
\setlength{\belowcaptionskip}{-0cm} 
        \centering
                \includegraphics[scale=0.109]{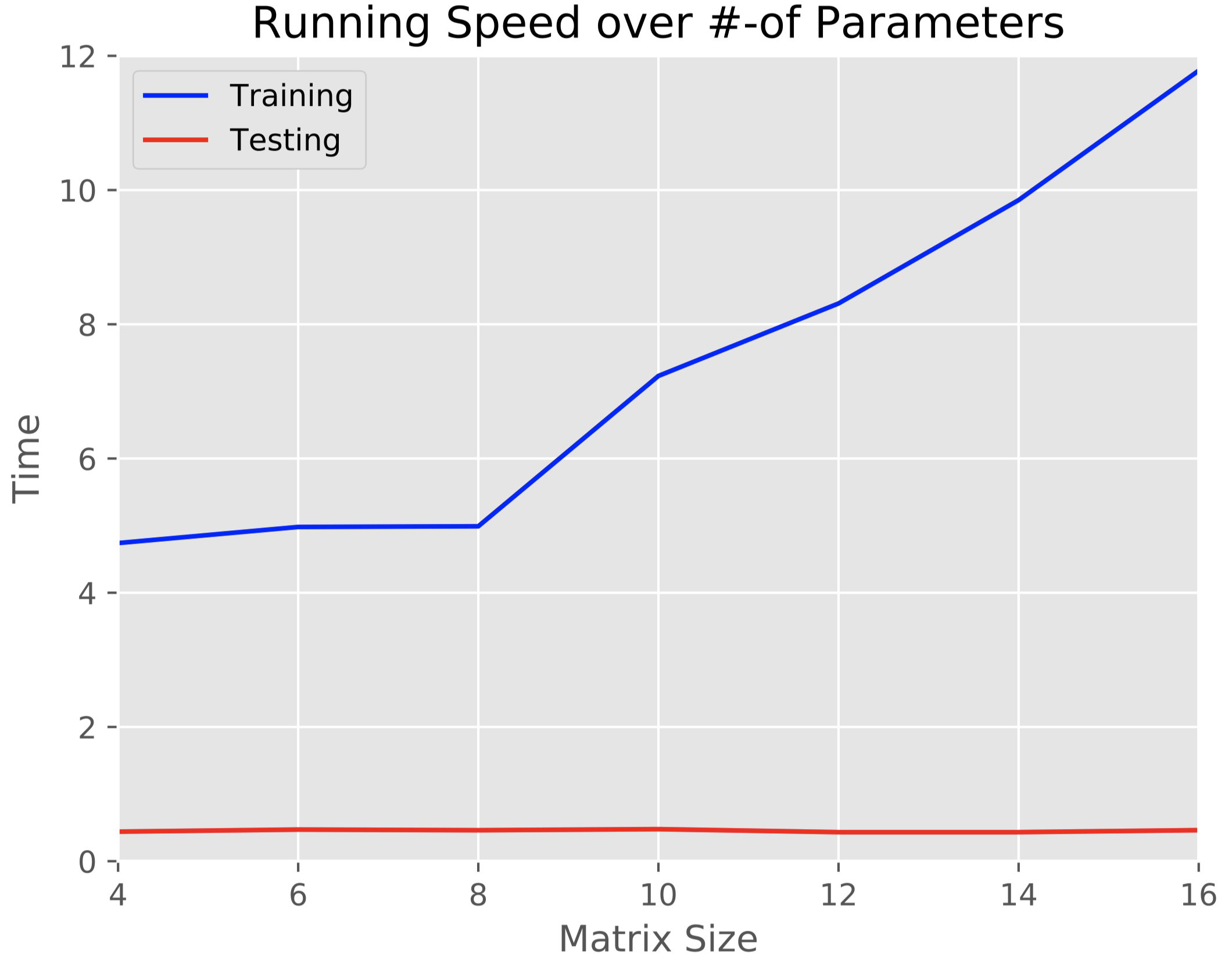}
                \includegraphics[scale=0.109]{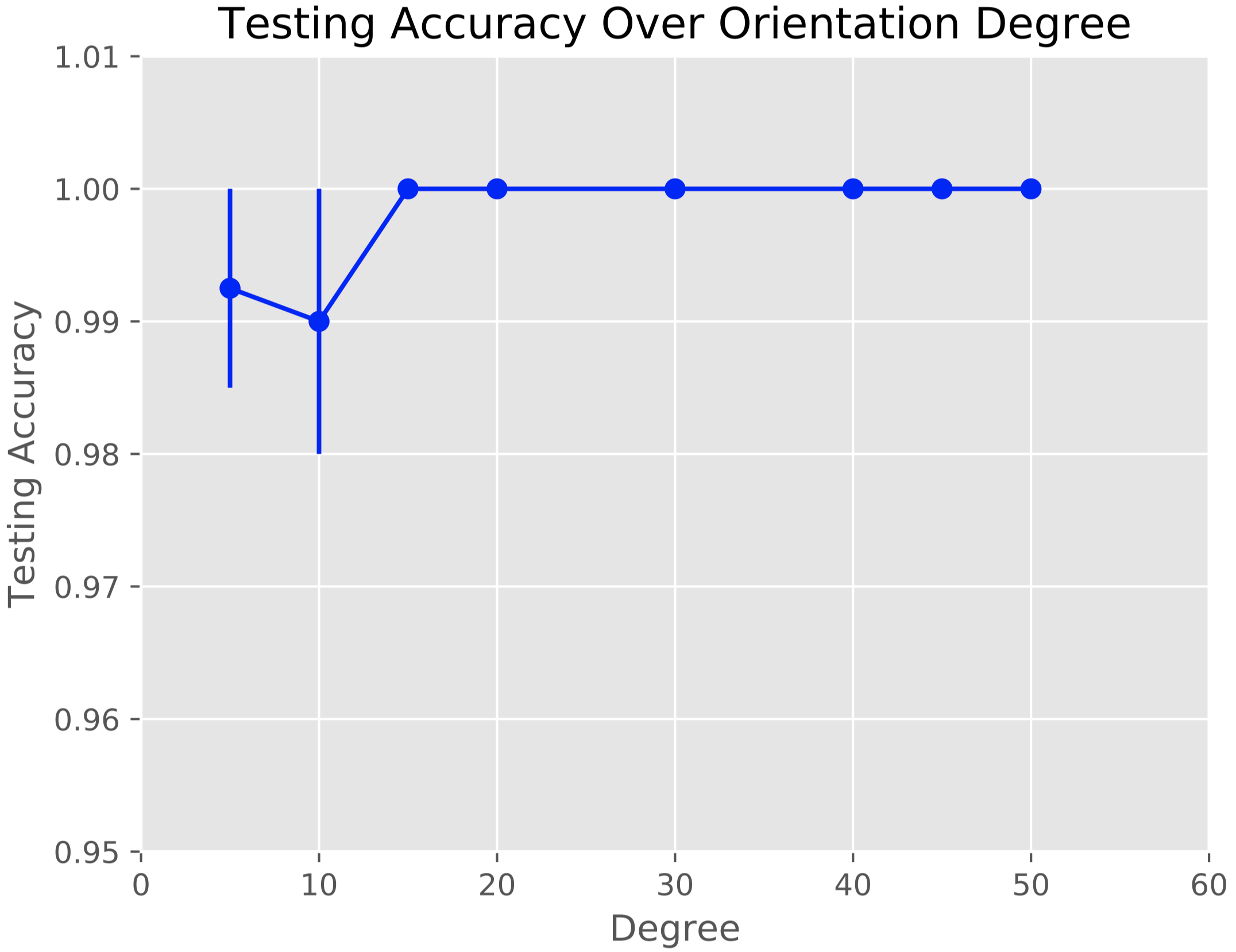}  
                \caption{\footnotesize Left: time versus matrix size. As the matrix size increases, the training time inevitably increases but the testing time consistently remains extremely small. Right: accuracy versus
                  degree difference of orientation in the dataset. Beyond the degree difference as small as $15^\circ$, the error bar becomes negligible implying our model quickly becomes very robust.
                  }\label{fig2.5}
\vspace*{-0.5cm}
\end{figure}

\vspace{-5pt}
\subsubsection{UCF-11: Action classification}
\vspace{-5pt}
The UCF-11 dataset \cite{liu2009recognizing} contains 1600 video clips of 11 different classes, such as basketball shooting, diving, etc.
The video lengths (frame sequences) vary from $204$ to $1492$, with the resolution of each frame being $320 \times 240$. We sample every $3$ frames, resize each frame to $160 \times 120$, and clip the frame sequences to have the length of $50$. For our method, we chose two convolution layers with kernels $7 \times 7$ and output channels $4$ and $6$ before the DCNN block. Hence, the dimension of the covariance matrices is $7\times 7$. 
For the manifold DCNN block, we use three residual blocks, 
with channels set to be $[1,3,3]$; $[3,3,4]$ and $[4,4,4]$ respectively. 
The kernel size is $5$ for each residual block with the initial dilation number being $1$ 
(if not specified, the initial dilated number is always $1$ in this paper.).
For TT-GRU and TT-LSTM, we follow the same setting as given in \cite{yang2017tensor}. 
For SPD-SRU, SRU, and LSTM, we use the same parameters as in \cite{chakraborty2018statistical}. 
All models achieve $>90\%$ training accuracy. 

\textbf{Results:}
Test accuracy with the number of parameters and time per epoch is shown in Table \ref{results:tab2}. 
We can see the number of parameters for our method is comparable with
SPD-SRU with higher test accuracy ($\approx 4\%$ improvement) and much faster runtime ($\approx 2.5\times$). 
Note that without residual connections, the accuracy drops to $0.809 \pm 0.044$: in other words,
residual connections are useful. 

{\bf Take-home message:} {\it With the above two experiments, we
can conclude that manifold DCNN \begin{inparaenum}[\bfseries (i)] \item is faster, \item uses fewer
parameters and \item gives better or comparable classification accuracy \end{inparaenum} compared to the state-of-the-art.}

\setlength{\intextsep}{0pt}%
\setlength{\columnsep}{5pt}%
\begin{table}[!b]
\setlength{\abovecaptionskip}{-0cm}
\setlength{\belowcaptionskip}{-0.55cm} 
   \centering
   \scalebox{0.75}{
\begin{tabular}{cccc} 
\topline\myrowcolour
{\bf Model} & {\bf \# params.} & {\bf time (s)/ epoch} & {\bf Test acc.}  \\
\midtopline
manifold DCNN & $3393$ & $\sim \highest{33}$ & $\highest{0.823 \pm 0.018}$ \\
SPD-SRU & $\highest{3337}$ & $\sim 76$ & $0.784 \pm 0.014$ \\
TT-GRU & $6048$ & $\sim 42$ & $0.78$ \\
TT-LSTM & $6176$ & $\sim\highest{33}$ & $0.78$ \\
SRU & $2535630$ & $\sim 50$ & $0.75$ \\
LSTM & $14626425$ & $\sim 57$ & $0.70$ \\
\bottomline
  \end{tabular}
}
\caption{\footnotesize Comparative results on UCF-11 data. Our model achieves the best accuracy and the fastest speed with a small number of parameters.}
\label{results:tab2}
\end{table}

\vspace{0.2cm}
\subsection{Group effects in Preclinical Alzheimer's disease}
\vspace{-0.1cm}
Cardinal features of Alzheimer's disease (AD) include the development of beta-amyloid plaques (amyloid), neurofibrillary tangles (tau), and progressive neurodegeneration (characterized by MRI) \cite{jack2018}. 
Autopsy studies among individuals with AD dementia indicate that degeneration of myelinated axons in the context of amyloid and tau pathology is a defining feature of dementia status \cite{perez2013dissecting}. Techniques for measuring axonal degeneration in vivo include analysis of cerebrospinal fluid, as well as diffusion-weighted imaging; however, few studies have tested the extent to which early amyloid accumulation may be associated with neural injury.
Our goal is to utilize our method to 
{\bf{identify white matter fiber bundles that are affected {\em early} in the preclinical disease process}}.
Positron emission tomography (PET) imaging with Pittsburgh compound B (PiB), which identifies amyloid deposition, 
can be used as an indicator of AD pathology \cite{ikonomovic2008post}.
Thus, we compared healthy individuals who were positive for AD pathology (PiB+)
to healthy individuals who were negative for pathology (PiB-).
Additionally, we compared individuals who carried a risk gene for AD 
(APOE+) to non-carriers (APOE-).

\vspace{-0.5cm}
\subsubsection{Diffusion-weighted imaging (DWI)}
\vspace{-5pt}
\vspace{-0.1cm}
{\bf Data acquisition:} Diffusion-weighted imaging was completed on a General Electric (GE) 3 Tesla scanner
with a $32$-channel head coil and a spin-echo echo-planar imaging pulse sequence among participants
who are asymptomatic. 
Multi-shell DWI data were collected using b-values $b=0,\ b=500,\ b=800,\ b=2000$,
with $2\times2\times2mm$ resolution. The signal was corrected using MRTrix3\cite{veraart2016denoising} 
and FSL's `eddy'\cite{andersson2016integrated}. 
Diffusion tensor imaging (DTI) and the orientation distribution functions (ODF), which were used as the representative of the DWI, 
were performed using the Diffusion Imaging in Python (DIPY) toolbox\cite{garyfallidis2014dipy}. 
To generate fiber bundles of interest, the data was processed using
TRACULA\cite{yendiki2011automated,yendiki2014spurious,yendiki2016joint}.
With this pipeline, we generated $18$ major fiber bundles \cite{wakana2007reproducibility}, as shown in Fig. \ref{DTI}(c).
Regions of interest (ROI) in the template space, 
were inversely warped back to the subject space to generate the fiber bundles and each data point used in the analysis for each participant. 

{\bf Analysis:} From the previous experiments, we can see that manifold DCNN performs well on classification problems with faster computation speed and fewer parameters. 
Due to the fast runtime {\em and} the small number of parameters, we can use permutation testing to perform
group analysis.
The statistical testing is performed on each fiber bundle between the two groups, to determine 
if the DCNN model between the two groups is different. 
To summarize, the setup is: 
\begin{inparaenum}[\bfseries (1)]
\item Group 1 (PiB+) versus Group 2 (PiB-),
\item Group 1 (APOE+) versus Group 2 (APOE-). 
\end{inparaenum}
Now, we will give some details of the DCNN models for DTI and ODF representations before the statistical analysis. 

\noindent\textbf{(i) Diffusion tensor imaging (DTI):}
Diffusion tensor imaging (DTI) is a method to represent the Diffusion imaging with SPD matrices. Since all of the data samples lie on the SPD manifold, the model is similar to the classification model above. 
The only difference between classification model and this group analysis model is that instead of the prediction of the classes, we are fitting the two groups of data into two trainable models, $\theta_1$ and $\theta_2$ and assessing if the distributions of $\theta_1$ and $\theta_2$ are statistically different.

\noindent\textbf{(ii) Orientation distribution function (ODF):}
Orientation distribution function (ODF) represents the probability densities of water diffusion over different orientations. 
In order to perform the statistical analysis, we discretized the space of orientations, i.e.,  $\mathbf{S}^{2}$. We sampled $724$ equally spaced points on the sphere $\mathbf{S}^{2}$ to represent the ODF. 
Let the ODF be denoted by $\mathbf{x}_t$, then after the discretization, we have $\sum_{i=1}^{724} \mathbf{x}_{t}^{i} = 1$. As ODF is a probability density function, we use square root parameterization \cite{brody1998statistical,srivastava2007riemannian} to represent ODF. Using the square root parameterization, we map $\mathbf{x}_t$ onto the positive orthant of the unit hypersphere of dimension $723$, i.e., $\mathbf{S}^{723}$. As in Section \ref{theory}, a key
component of DCNN is the definition of $\textsf{wFM}$, which we can define on $\mathbf{S}^n$:
\vspace*{-0.35cm}
\begin{align*}
\label{dcnnodf}
    \mathbf{y}(s) & = \textsf{wFM}\left(\left\{w(i)\right\},\left\{\mathbf{x}(s-d*(k-1):d:s)\right\}\right)\\[-1.5mm]
          & = \arg \min_{M}\sum_{i=0}^{k-1}w(i) d_{\mathbf{S}}^2\left(\mathbf{x}(s-d*i),M\right), \numberthis
\end{align*}
\vspace*{-0.45cm}

Here $d_{\mathbf{S}}$ is the rotation invariant geodesic distance on $\mathbf{S}^{723}$ and $\mathbf{x}(s)$ is a sample on $\mathbf{S}^{723}$ for $s \in \mathbf{N}$. Analogous to the SPD manifold, we can define a recursive $\textsf{wFM}$ estimator $\mathbf{m}_n$:
\vspace*{-0.4cm}
\begin{align}
\label{rodf}
    \mathbf{m}_0 = \mathbf{x}(s) \qquad \mathbf{m}_n = \Gamma_{\mathbf{m}_{n-1}}^{\mathbf{x}(s-n*d)}\left(\frac{w(n)}{\sum_{j=0}^{n}w(j)}\right),
\end{align}
\vspace*{-0.4cm}

where $\Gamma$ is the shortest geodesic on $\mathbf{S}^{723}$. Using the above-defined estimator of $\textsf{wFM}$, we can define DCNN on $\mathbf{S}^{723}$ as in Section \ref{theory}.
{\bf{Note}}: Our baseline model, SPD-SRU cannot deal with the $S^n$ manifold as we do here.


\vspace{-0.3cm}
\subsubsection{Statistical analysis: permutation testing}
\vspace{-0.1cm}
Suppose we train our model for each of the two groups for each fiber bundle $^{fb}$ we have,
with parameters $\theta_{1}^{fb}$ and $\theta_{2}^{fb}$. 
Our goal is to test whether the fiber bundle $^{fb}$ is statistically different between the two groups.
Thus, we model the manifold-valued data and perform statistical analysis in the parameters space.
Since the models for each group lie in the same parameter space,
the statistical analysis can be performed in the parameter space by bootstrapping.
We can measure the distance between two models as $\sigma^{fb} = ||\theta_{1}^{fb}-\theta_{2}^{fb}||$ to represent the distance between the group-wise fitted models' distributions in parameter space. 
Then, we need to evaluate how statistically significant the distance is -- and if
the value is large enough, it is unlikely to happen by chance. 
A simple way to perform the test for statistical significance is via permutation testing. 
If we randomly shuffle (via a random permutation) the group information for all our samples (i.e., subjects) and
run our model for both ``random'' groups,
we will get new parameters $\hat{\theta}_{1}^{fb}$ and $\hat{\theta}_{2}^{fb}$.
We define $\hat{\sigma}^{fb} = ||\hat{\theta}_{1}^{fb}-\hat{\theta}_{2}^{fb}||$ as a random variable. 
After permuting 5000 times, we can estimate the distribution of the $\hat{\sigma}^{fb}$ -- this is
the Null distribution (See Fig. \ref{ptest} as examples). 
The $p$-value is defined as the ranking of the $\sigma^{fb}$ among the distribution of the $\hat{\sigma}^{fb}$. 
If the $p$-value is less than the significance threshold $\alpha = 0.05$,
we can conclude that this is {\bf{not}} likely to happen by chance.

Since the length of different fiber bundles varies from $11$ to $73$, we construct the DCNN to
have $3$ layers of residual units, 
with channels being $1,3,3$; $3,3,5$ and $5,8,10$ respectively. 
And the $1$-D kernel size is $3$. We use all the data we have to pre-train the model. 
After pre-training, we fine tune the model during the permutation testing. 

\begin{table}[!t]
\label{results:tab0}
\vspace*{-0.0cm}
\setlength{\abovecaptionskip}{-0cm}
\setlength{\belowcaptionskip}{-0.15cm} 
   \centering
   \scalebox{0.5}{
\begin{tabular}{ccccccc} 
\topline\myrowcolour
 & \multicolumn{3}{c}{{\bf PiB}} & \multicolumn{3}{c}{{\bf APOE}} \\
\myrowcolour
\multirow{-2}{*}{\bf Experiments} &  {\bf Total}    & {\bf Positive} & {\bf Negative}  &  {\bf Total} & {\bf Positive} & {\bf Negative} \\
\midtopline
Number                            & 196             &  29            & 167             & 669          & 247            & 422 \\
Age (years ) (mean (SD))          & 62.40 (6.33)    &  66.29 (4.95)  & 61.75 (6.30)    & 65.61 (8.68) & 64.55 (7.99)   & 66.23 (9.00)\\
Sex  (female; \%)                 & 134 (68\%)      &  21 (72\%)     & 113 (68\%)      & 426 (64\%)   & 159 (64\%)     & 267 (63\%) \\ 
\bottomline
\end{tabular}
}
\caption{\footnotesize \label{demographic}Description of data/participant demographics used in the study.}

\end{table}

\vspace{-0.5cm}
\subsubsection{Result 1: Group analysis: PiB+ versus PiB-}
\vspace{-0.2cm}{}
The study included imaging data acquired from $196$ cognitively unimpaired (healthy) participants acquired in a local cohort at the University of Wisconsin. We provide demographic information from participants with PiB and APOE measures in
Table \ref{demographic}. Initial analyses were run using single-shell data, where the model was run on all $18$ fiber bundles, one by one, with the parameters mentioned above.
We performed permutation tests for each fiber bundle individually. 

Results for the $18$ fibers are shown in Table \ref{results:tab3} (column 2). 
We find that two of the $18$ fibers satisfied the threshold of $0.05$, which means that statistically these fiber bundles are different across the two groups. 
Since the sample sizes were small, the results presented are uncorrected $p$-values (multiple testing correction was not performed). 

Fiber bundles evaluated in this analysis included those which are known {\bf{to be affected}} in AD,
including the superior longitudinal fasciculus and cingulum bundle, as well
as control tracts that are {\bf{not likely to be affected}} by AD, such as the corticospinal tract. 
We found significant differences between PiB+ and PiB- groups in fiber bundles that are
likely to be affected by AD, including the superior longitudinal fasciculus and 
Corpus callosum - forceps minor. 

When compared with the SPD-SRU model, which also reported brain imaging experiments in their paper, the results show only one out of $18$ fibers survives. 
And also, we find that our model runs {\bf{much faster}} (about 5$\times$), which is very important when running permutation testing thousands of times. 
It takes 3.5 days to run permutation testing 5000 times using DCNN, while the SPD-SRU takes 18 days.
When we keep the number of GPUs fixed, the difference between 3.5 and 18 will be even more sizable if we expand the number of permutation testing to 10000 or more.

\vspace{-0.5cm}
\subsubsection{Result 2: Group analysis: APOE+ versus APOE-}
\vspace{-0.1cm}
The APOE analysis was performed using data from $669$ subjects with APOE information, with $247$ of them being positive for APOE4 (a risk factor for AD). 
Analyses were also conducted using the multi-shell dMRI to generate ODF information. 
Similar to the preceding group difference analysis, the model was run on all
$18$ fiber bundles with the parameters described previously on both DTI and ODF. 

The results for $18$ fibers are shown in Table \ref{results:tab3} in column 3. 
It is noteworthy that SPD-SRU can only deal with the SPD manifold. 
So for ODF, which lies on $S^n$, we can {\it{only}} run our DCNN model to do the group analysis.

Here, we found that four of the $18$ fiber bundles met the significance threshold of $0.05$ with DTI, while SPD-SRU only captured one. 
Five fiber bundles were identified when using ODF. 
We found differences by APOE genotype in the forceps minor, cingulum projecting to parietal cortex,
anterior thalamic projections, superior longitudinal fasciculus projecting to parietal cortex and inferior longitudinal fasciculus.
We did not find differences in fiber bundles
unlikely to be affected by AD, such as the corticospinal tract in both experiments. 
Fiber bundles that were consistently identified in both the DTI and ODF analyses included the inferior longitudinal fasciculus and the anterior thalamic projections. 

\vspace{-0.5cm}
\subsubsection{Discussion of preclinical AD analysis results}
\vspace{-0.2cm}

\begin{figure}[!t]
\setlength{\abovecaptionskip}{-0cm}
\setlength{\belowcaptionskip}{-0cm} 
        \centering
                \includegraphics[scale=0.1072]{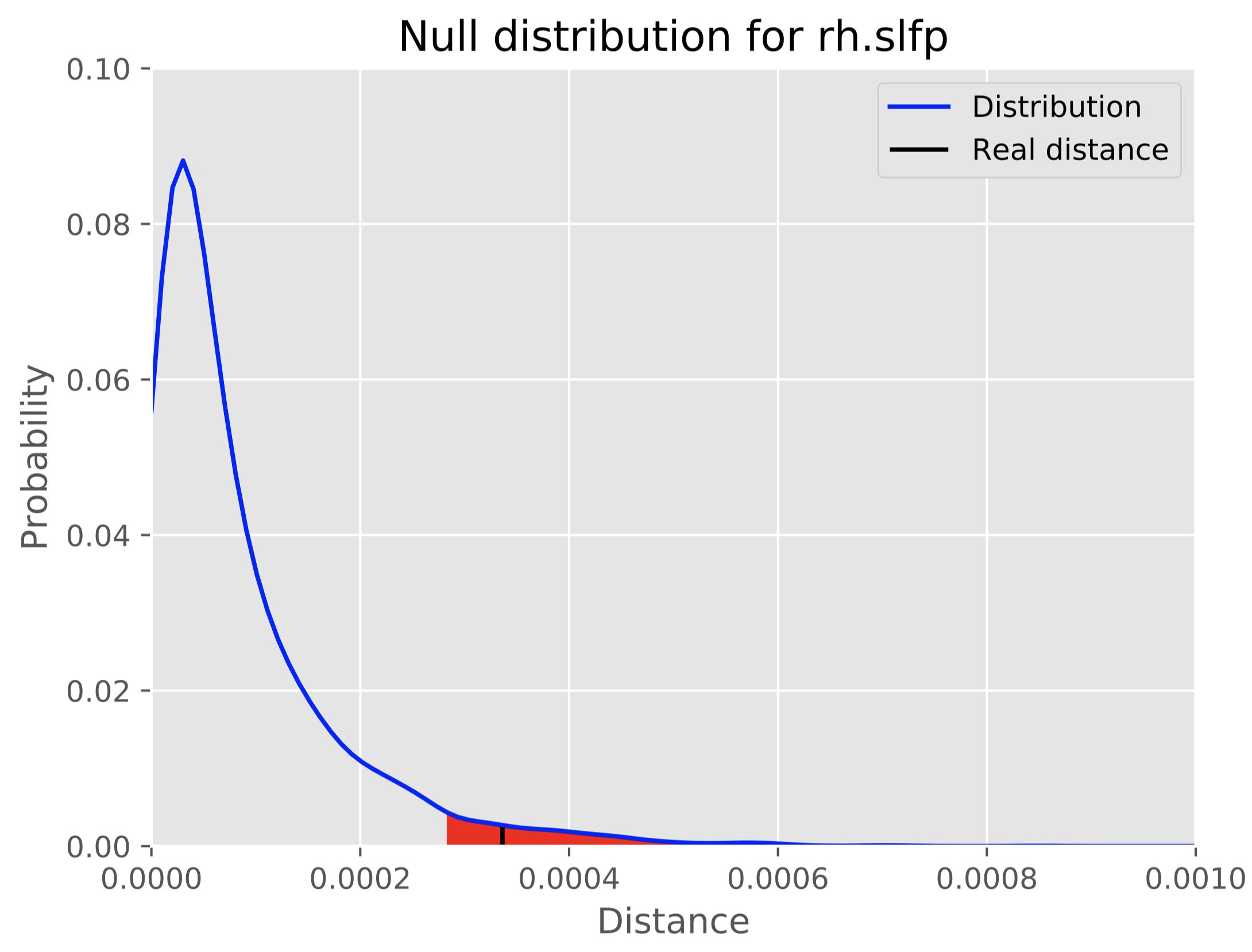}
                \includegraphics[scale=0.1072]{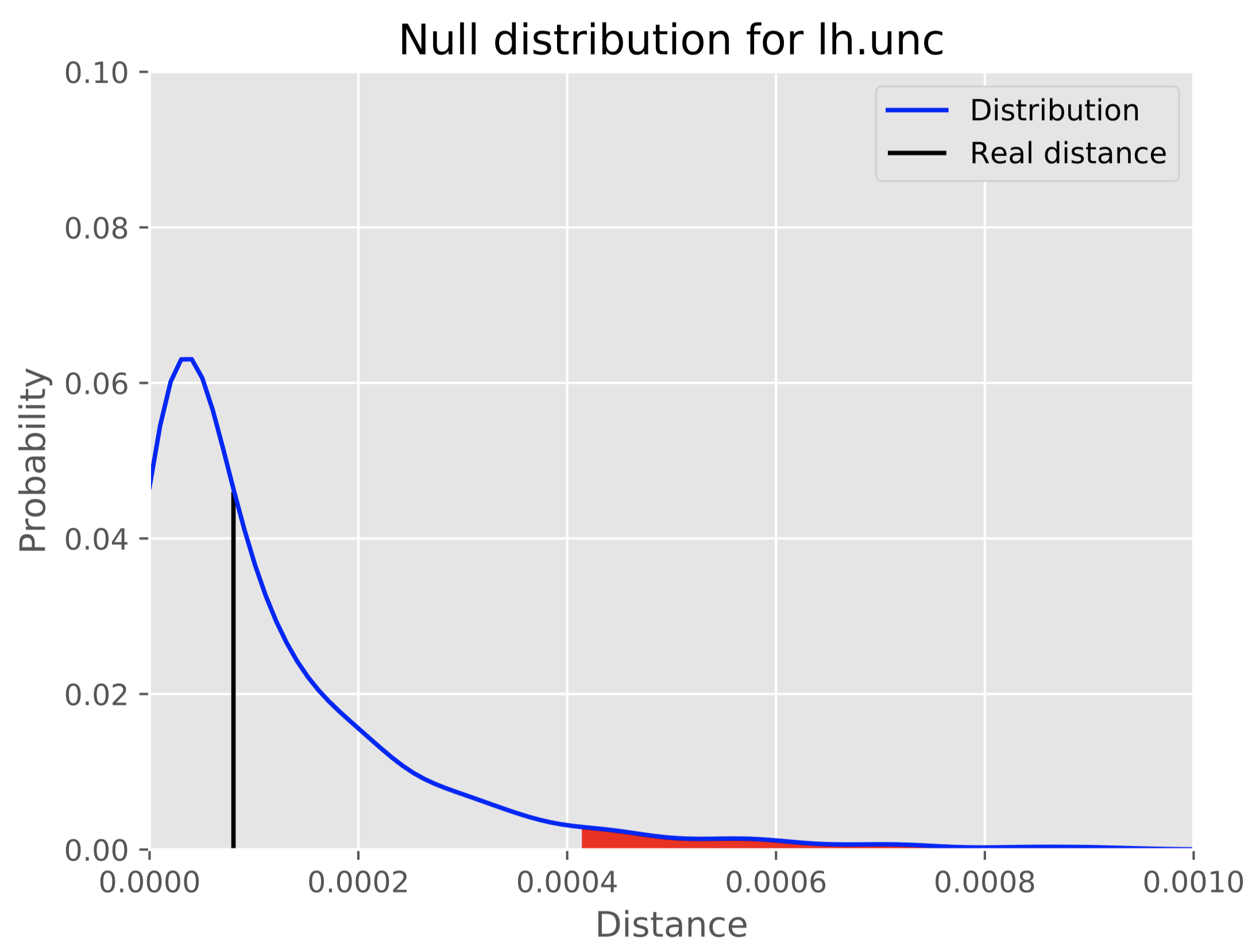}
                \caption{\footnotesize The Null distribution for one fiber bundle with $\alpha = 0.05$. If the real distance (black line) lies in the threshold (red area), that test is believed to not happen by chance.}
                \label{ptest}
\vspace*{-0.5cm}
\end{figure}
While amyloid and tau pathology are defining features of AD, methods are also needed to detect AD-associated neurodegeneration \cite{jack2018nia}. 
Neurodegeneration may signal future cognitive decline. However, methods for detecting early and subtle neurodegeneration, particularly of myelinated axons, are not yet available, especially in preclinical AD.
This is why our results here seem promising.

The results suggest significant differences in underlying fiber bundle microstructure among individuals who meet biological criteria for AD 
(based on PiB status) as well as differences by APOE genotype. 
Of note, our algorithm identified significant differences in the cingulum bundle by PiB status; 
this white matter fiber bundle connects medial temporal lobe and parietal cortices as part of a memory network that is impacted by AD, 
and is vulnerable to degeneration in the early stages of AD. 
Differences in the cingulum bundle were also apparent among carriers of the APOE4 allele, 
a genetic risk factor for sporadic AD. 
Likewise, superior longitudinal fasciculus differed by AD biomarker status and APOE genotype. 
Projections identified as being significantly different included fiber bundles projecting to parietal cortices. 
Parietal cortices are significantly impacted by AD pathology and are among the first to show amyloid accumulation. 
The results presented here may suggest that amyloid accumulation negatively impacts adjacent white matter fiber bundles. 
It may also be possible that degeneration of fiber bundles is a function of AD pathology spreading to anatomically linked brain regions via white matter fiber bundles, 
although further longitudinal evaluation is needed to test the hypothesis. 
In summary, statistical analysis enabled by our proposed algorithm was
capable of identifying differences in biologically meaningful brain regions. 

{\bf Take-home message:} {\it Our DCNN model was able to capture more fiber differences with significant effects compared to the SPD-SRU. 
It is also noteworthy that our model is much more efficient: only $60$s for one realization of the permutation test ($\times \#$ of realizations), 
while the SPD-SRU model $>5\times$ times slower. 
Compared with the SPD-SRU, which can only handle DTI (SPD), our method is more general: handles
both DTI (SPD) and ODF ($\mathbf{S}^n$) data.} 

\setlength{\intextsep}{0pt}%
\setlength{\columnsep}{0pt}%
\begin{table}[!t]
\setlength{\abovecaptionskip}{-0.0cm}
\setlength{\belowcaptionskip}{-0.25cm} 
\vspace{-0.2cm}
   \centering
   \scalebox{0.64}{
\begin{threeparttable}
\begin{tabular}{cccccc} 
\topline\myrowcolour
                                & \multicolumn{5}{c}{$p$-value} \\
\myrowcolour
{\bf Fiber Name} & \multicolumn{2}{c}{\bf Experiment 1}                & \multicolumn{3}{c}{\bf Experiment 2} \\
\myrowcolour
              & \multicolumn{2}{c}{PiB+ versus PiB-} & \multicolumn{3}{c}{APOE+ versus APOE-}\\
\myrowcolour
              & {DCNN}            & {SPD-SRU}          & {DCNN}              & {SPD-SRU}          & {DCNN}              \\
\myrowcolour                                                       
              & {on DTI}          & {}                 & {on DTI}            & {}                 & {on ODF}            \\
\midtopline                                                       
fmajor\_PP    & 0.443            & 0.923              & 0.207               & 0.600              & 0.778               \\
fminor\_PP    & $\highest{0.008}$& 0.158              & $\highest{0.035}$   & $\highest{0.025}$  & N/A                 \\
\myrowcolour                                                       
lh.atr\_PP    & 0.323            & 0.632              & 0.30                & 0.991              & $\highest{0.028}$   \\
\myrowcolour                                                       
rh.atr\_PP    & 0.295           & 0.143              & 0.86                & 0.271              & 0.563               \\
lh.cab\_PP    & 0.276           & 0.363              & 0.76                & 0.644              & 0.500               \\
rh.cab\_PP    & 0.311           & 0.263              & 0.78                & 0.848              & 0.444               \\
\myrowcolour                                                       
lh.ccg\_PP    & 0.230            & 0.267              & $\highest{0.042}$   & 0.609              & $\highest{0.043}$   \\
\myrowcolour                                                       
rh.ccg\_PP    & 0.093           & 0.087              & $\highest{0.048}$   & 0.532              & $\highest{0.048}$   \\
lh.cst\_AS    & 0.561            & 0.143              & 0.58                & 0.350              & 0.800               \\
rh.cst\_AS    & 0.629           & 0.278              & 0.35                & 0.667              & 0.769               \\
\myrowcolour                                                       
lh.ilf\_AS    & 0.309            & 0.895              & 0.47                & 0.977              & $\highest{0.042}$   \\
\myrowcolour                                                       
rh.ilf\_AS    & 0.405           & 0.889              & 0.46                & 0.563              & 0.857               \\
lh.slfp\_PP   & 0.482            & 0.615              & 0.68                & 0.107              & 0.192               \\
rh.slfp\_PP   & 0.571           & 0.941              & $\highest{0.047}$   & 0.154              & $\highest{0.050}$   \\
\myrowcolour                                                       
lh.slft\_PP   & $\highest{0.005}$& $\highest{0.041}$  & 0.92                & 0.649              & 0.556               \\
\myrowcolour                                                       
rh.slft\_PP   & 0.790           & 0.462              & 0.53                & 0.947              & 0.333               \\
lh.unc\_AS    & 0.623            & 0.158              & 0.23                & 0.860              & 0.933               \\
rh.unc\_AS    & 0.298           & 0.895              & 0.34                & 0.324              & 0.182               \\
\bottomline                                                                          
\end{tabular}
\begin{tablenotes}
        \footnotesize
        \item[*]N/A: This ODF fiber bundle did not pass Quality Check (QC) after pre-processing. Therefore, we left it out of the analysis to avoid inconsistencies in the parameters used for pre-processing the full set of fiber bundles. 
\end{tablenotes}
\end{threeparttable}
}

   \caption{\footnotesize $p$-values (uncorrected) for all fibers in different groups. The highlights are the fiber bundles that satisfy the significance threshold.
     Runtime for DCNN is $5\times$ times faster than SPD-SRU (not included here).}
\label{results:tab3}

\end{table}

\section{Conclusions}
We present a new Dilated CNN formulation to model sequential and spatio-temporal manifold data, where
few alternatives are available. 
Compared with the standard sequential model (RNN), our method can
improve the performance when evaluated on the number of parameters and runtime.
We show that when using wFM, Weight normalization, ReLU, and Dropout are no longer needed in this formulation.
On the experimental side, for video analysis, we show that improvements can be obtained
with fewer parameters and shorter running time.
Importantly, we show that our algorithmic contributions facilitate scientific discovery relevant to AD,
and may facilitate early disease detection at the preclinical stage.
The analysis enabled by our formulation revealed subtle neurodegeneration of white matter fiber bundles affected by AD pathology,
in brain regions implicated in prior studies of AD.
The code is available at https://github.com/zhenxingjian/DCNN.

\section*{Acknowledgments}

This research was supported in part by grants R01EB022883, R01AG059312, RF1AG027161 (WRAP study), R01AG021155,  P50AG033514 (Wisconsin ADRC), R01 AG037639,
P30 AG062715, 
R01 AG059312, 
RF1 AG027161,   
and NSF CAREER award RI 1252725. 

\newpage
{\small
\bibliographystyle{ieee_fullname}
\bibliography{egbib}
}

\end{document}